\documentclass[11pt]{article}

\usepackage{amsmath,amssymb,amsthm}
\usepackage{graphicx}
\usepackage{booktabs}
\usepackage{hyperref}
\usepackage{url}
\usepackage[numbers,sort&compress]{natbib}
\usepackage{float}
\usepackage{placeins}
\usepackage{tikz}
\usetikzlibrary{arrows.meta, positioning, fit, backgrounds, shapes.geometric}
\usepackage{algorithm}
\usepackage{algpseudocode}
\usepackage{pifont}
\usepackage{microtype}
\usepackage{nicefrac}
\usepackage{amsfonts}
\usepackage[utf8]{inputenc}
\usepackage[T1]{fontenc}
\usepackage{lipsum}
\usepackage{geometry}
\usepackage{caption}  % برای تنظیم کپشن

\graphicspath{{media/}}
\title{RecKAN: Kolmogorov–Arnold Networks with a Learnable Recursive Polynomial Basis}

\author{
  Amirhosein Azarpour \\
  Department of Computer Science \\
  Shahid Beheshti University \\
  Tehran, Iran \\
  \texttt{am.azarpour@mail.sbu.ac.ir}
}

\begin{document}
\maketitle

\begin{abstract}
Kolmogorov--Arnold Networks (KANs) replace the fixed scalar weights of a
standard network with learnable univariate functions on each edge, but
existing variants still fix the \emph{basis} that those functions are
built from: B-splines, Chebyshev polynomials, wavelets, or Jacobi
polynomials, and learn only the combination weights over it. We
introduce RecKAN, which instead defines the basis itself by a
second order polynomial recurrence, $R_{n+1}(x) = (ax^2+bx+c)R_n(x) +
(dx+e)R_{n-1}(x)$, whose five coefficients are learned jointly with the
network. We show this recurrence recovers several classical polynomial
families including both kinds of Chebyshev polynomials, Fibonacci,
Pell, and Jacobsthal polynomials as special cases, and prove that its
degree grows linearly in $n$ exactly on the sub-family containing all of
them, giving a concrete sense in which the learned basis can move beyond
any fixed classical choice. Across multiple benchmark datasets spanning
image, text, biomedical time series classification, and time series 
forecasting, RecKAN outperforms three parameter-matched KAN baselines 
(Chebyshev, Jacobi, and spline based) on all classification tasks and 
achieves the lowest MSE on the ETTh1 forecasting benchmark. Additionally, 
when used as a classifier head with a convolutional backbone, RecKAN 
achieves higher accuracy than standard MLP heads on Fashion MNIST, 
CIFAR-10, and SVHN. On a synthetic function fitting
benchmark it tracks a sharply oscillatory target that a
parameter comparable MLP under fits. We further show that the learned
recurrence coefficients are interpretable: on the task requiring the
most local structure, training moves the basis away from the
linear degree growth regime that contains every classical family we
identify, consistent with our theoretical analysis of what that structural
shift enables.
\end{abstract}

% کلیدواژه‌ها
\noindent\textbf{Keywords:} Kolmogorov--Arnold Networks, Recursive Polynomial Basis, 
Learnable Basis Functions, Polynomial Recurrence, Function Approximation.

\newtheorem{definition}{Definition}
\newtheorem{proposition}{Proposition}
\newtheorem{remark}{Remark}

% =====================================================================
%  SECTION — INTRODUCTION
% =====================================================================
\section{Introduction}
\label{sec:introduction}

Multilayer perceptrons are universal function approximators
\citep{cybenko1989approximation, hornik1989multilayer}, but they reach
this expressiveness with a fixed architecture: every node applies a
fixed nonlinearity, and all of the network's adaptivity lives in the
scalar weights on its edges. Kolmogorov--Arnold Networks (KANs) invert
this design. Motivated by the Kolmogorov--Arnold representation theorem
\citep{kolmogorov1957}, KANs replace each scalar edge
weight with a learnable univariate function, so that adaptivity lives on
the edges rather than the nodes; the original KAN parametrizes these
edge functions with B-splines. This single change is reported to bring
two practical benefits alongside the architectural novelty: better
parameter efficiency than comparably sized MLPs on a range of function 
fitting and scientific computing tasks, and edge functions that can be
inspected and, in simple cases, read off directly as closed form
expressions an interpretability property standard MLPs largely lack
\citep{liu2024kan, liu2024kan2}.

This reframing has been followed by a rapid succession of variants that
keep the edge function idea but swap out the spline basis for something
else: Chebyshev polynomials \citep{ss2024chebykan}, wavelets
\citep{bozorgasl2024wavkan}, fractional Jacobi polynomials
\citep{aghaei2024fkan}, radial basis functions
\citep{li2024fastkan, li2024rbfkan}, and piecewise-linear ReLU-based
bases \citep{qiu2024relukan}, among others, have each been proposed as a
faster or more accurate substitute for the original spline; a recent
survey counts dozens of such variants and application-specific
descendants published within roughly a year of the original architecture
\citep{somvanshi2024survey}. The resulting architectures have been
applied to image classification \citep{azam2024kanvision}, time series
analysis \citep{vacarubio2024kantimeseries}, and scientific computing
more broadly \citep{liu2024kan2}. Despite this diversity, every one of
these variants shares a structural assumption with the original KAN: the
basis family the finite set of candidate univariate functions each
edge function is built from is fixed by the choice of architecture
before training begins, and only the linear combination weights over that
fixed basis are learned. Choosing a basis is therefore a modeling
decision made once, up front, and left to the practitioner's judgment or
to trial and error across variants; the network itself has no mechanism
to revise that choice in light of the data.

We remove that assumption. Rather than selecting a basis family in
advance, we generate it with a second order polynomial recurrence whose
five coefficients are themselves learned jointly with the rest of the
network (Section~\ref{sec:method}). As we show in
Section~\ref{subsec:special-cases}, this recurrence is expressive enough
to contain several classical polynomial families as special cases 
including both kinds of Chebyshev polynomials, Fibonacci, Pell, and Jacobsthal
polynomials precisely because those families share the property of
having recurrence coefficients that do not depend on the polynomial
degree, unlike families such as Legendre polynomials. Because the
coefficients are learned rather than fixed, training can recover one of
these known, well-understood bases when it fits the data well, or move
away from all of them into a region of function space that, to our
knowledge, no fixed basis KAN variant can reach and, as we show in
Section~\ref{subsec:exp-learned-params}, which of these two things
happens in practice depends measurably on the target task.

Our contributions are fourfold. First, we introduce a learnable
recursive polynomial basis for KANs, together with a formal
characterization of which classical polynomial families it contains as
special cases and how its expressiveness (concretely, the polynomial
degree reachable at a given recursion depth) depends on its parameters.
Second, we provide an empirical comparison against three representative
fixed or alternatively parametrized KAN baselines
\citep{ss2024chebykan, aghaei2024fkan, liu2024kan}, at matched parameter
budgets, across multiple benchmark datasets spanning image, text,
biomedical time series classification, and time series forecasting.
Third, we extend this
evaluation by replacing standard MLP classifier heads with RecKAN on top
of convolutional backbones, showing consistent improvements on three
image classification datasets. Fourth, we analyze the recurrence parameters
the model actually learns in these experiments and connect them back to
the theoretical characterization from our first contribution, giving a
concrete, interpretable account of what the model changed about its own
basis in order to fit each task, rather than treating the learned basis
as a black box.

\section{Related Work}
\label{sec:related-work}

\paragraph{Kolmogorov--Arnold Networks.}
KANs replace scalar edge weights with learnable univariate functions
\citep{liu2024kan,liu2024kan2}, motivated by the Kolmogorov--Arnold
representation theorem \citep{kolmogorov1957,arnold1957} and its earlier
critical reappraisal in the neural-network literature
\citep{girosi1989kolmogorov}. Subsequent work has studied their
theoretical expressiveness \citep{wang2024expressiveness}, benchmarked
them against MLPs and tree based models on tabular data
\citep{poeta2024tabular}, and surveyed the rapidly growing design space
\citep{somvanshi2024survey,ji2024comprehensive,seydi2024comparative}.
Architectural extensions include convolutional variants
\citep{bodner2024convkan}, graph variants
\citep{kiamari2024gkan,xu2024fourierkan}, temporal variants
\citep{genet2024tkan,vacarubio2024kantimeseries}, segmentation backbones
\citep{li2024ukan}, activation function-based reformulations
\citep{ta2025afkan}, and parameter-efficient designs
\citep{ta2025prkan,koenig2025leankan,li2024fastkan}. Applications include
computer vision \citep{azam2024kanvision}, sensor-data processing
\citep{martinezheredia2026sensorkan}, and permutation invariant function
representation \citep{shuai2025invariant}.

\paragraph{Physics-informed and scientific KANs.}
A substantial line of work embeds KANs into physics informed learning
pipelines. KINN reformulates strong, energy, and inverse-form PDE
losses on top of KAN layers and reports consistent gains over MLP-based
PINNs across multiscale, singular, and heterogeneous problems
\citep{wang2024kinn}. Related efforts include physics-informed KANs for
Navier Stokes flow \citep{shukla2025pikan}, hybrid encoder-decoder
KAN GRU architectures with augmented-Lagrangian losses for PDE solving
\citep{zhang2025alpkan}, and physics-informed KAN formulations for power
system dynamics \citep{shuai2025pikanpower}. These results are broadly
consistent with the parameter efficiency and interpretability motivations
underlying the original KAN proposal \citep{liu2024kan,liu2024kan2}.

\paragraph{Fixed and structured basis families.}
A dominant strategy replaces the original B-spline edge functions with a
different, still-fixed basis. Examples include Chebyshev polynomials
\citep{ss2024chebykan}, trainable fractional Jacobi bases
\citep{aghaei2024fkan}, wavelets \citep{bozorgasl2024wavkan}, radial-basis
formulations \citep{li2024rbfkan,li2024fastkan,dutta2026freerbfkan},
ReLU-based piecewise constructions \citep{qiu2024relukan}, and Fourier
representations \citep{xu2024fourierkan}. A broad comparative study
surveys eighteen distinct polynomial families including orthogonal,
hypergeometric, $q$-, Fibonacci-related, combinatorial, and
number theoretic polynomials as candidate KAN bases
\citep{seydi2024comparative}. Classical results on polynomial recurrences
\citep{lee2012pqfibonacci,luzon2009riordan} underlie several of these
families, since Chebyshev, Fibonacci, Pell, and Jacobsthal polynomials
all arise from constant coefficient second order recurrences. Across this
line of work, the generating family is fixed by architecture choice, and
training adjusts only coefficients, scales, or within family parameters
\citep{ss2024chebykan,aghaei2024fkan,bozorgasl2024wavkan,li2024rbfkan,
qiu2024relukan,dutta2026freerbfkan}.

\paragraph{Relation to RecKAN.}
RecKAN differs by learning the second order recurrence that generates the
basis itself, rather than selecting one fixed family in advance. This
recurrence subsumes several constant coefficient polynomial families
studied in the classical literature
\citep{lee2012pqfibonacci,luzon2009riordan} as special cases, while
remaining compatible with the broader KAN design space explored in
vision, graph, temporal, tabular, and physics informed settings
\citep{azam2024kanvision,kiamari2024gkan,genet2024tkan,poeta2024tabular,
wang2024kinn,zhang2025alpkan}. Where prior KAN variants adapt parameters
within a fixed basis family
\citep{ss2024chebykan,aghaei2024fkan,bozorgasl2024wavkan,li2024rbfkan,
qiu2024relukan,xu2024fourierkan,dutta2026freerbfkan}, RecKAN adapts the
basis-generating rule itself.

\section{Method}
\label{sec:method}
 
\subsection{Preliminaries: Kolmogorov--Arnold Networks}
\label{subsec:kan-prelim}
 
The Kolmogorov--Arnold representation theorem states that any continuous
function $f:[0,1]^n \to \mathbb{R}$ can be written as a finite superposition
of continuous univariate functions~\cite{kolmogorov1957, arnold1957}.
Kolmogorov--Arnold Networks (KANs) operationalize this result as a learning
architecture in which every edge of the network, rather than every node,
carries a learnable univariate function~\cite{liu2024kan}. Concretely, a KAN
layer mapping $\mathbb{R}^{d_{\text{in}}} \to \mathbb{R}^{d_{\text{out}}}$ is
defined edge-wise as
\begin{equation}
    y_j = \sum_{i=1}^{d_{\text{in}}} \phi_{i,j}(x_i),
    \qquad j = 1, \dots, d_{\text{out}},
    \label{eq:kan-layer-generic}
\end{equation}
where each $\phi_{i,j}:\mathbb{R}\to\mathbb{R}$ is a learnable univariate
function rather than the fixed scalar weight of a standard linear layer. The
original KAN formulation parameterizes $\phi_{i,j}$ with B-splines
\cite{liu2024kan}; subsequent work has replaced the spline basis with other
fixed function families, including Chebyshev polynomials
\cite{ss2024chebykan}, wavelets \cite{bozorgasl2024wavkan}, and fractional
Jacobi polynomials \cite{aghaei2024fkan}. In these variants, the basis family
is fixed before training, while only the linear combination coefficients over
that basis are learned. RecKAN removes this restriction by learning the
recurrence that generates the basis itself.
 
\subsection{A Learnable Recursive Polynomial Basis}
\label{subsec:recursive-basis}
 
\begin{definition}[Recursive polynomial basis]
\label{def:recurrence}
Let $a,b,c,d,e\in\mathbb{R}$. The \emph{recursive polynomial basis} of
order $K$ generated by $(a,b,c,d,e)$ is the sequence of functions
$R_0,R_1,\dots,R_K:\mathbb{R}\to\mathbb{R}$ defined by
\begin{align}
    R_0(x) &= 0, \label{eq:R0} \\
    R_1(x) &= 1, \label{eq:R1} \\
    R_{n+1}(x) &=
    \underbrace{\left(ax^2+bx+c\right)}_{\textstyle\alpha(x)}R_n(x)
    +
    \underbrace{\left(dx+e\right)}_{\textstyle\beta(x)}R_{n-1}(x),
    \qquad n=1,\dots,K-1.
    \label{eq:recurrence}
\end{align}
The five coefficients $(a,b,c,d,e)$ are shared across all recursion steps and
are treated as learnable parameters rather than fixed constants.
\end{definition}
 
Equation~\eqref{eq:recurrence} is a second order linear recurrence with
input-dependent but recursion index independent coefficients $\alpha(x)$ and
$\beta(x)$; because they are shared across all basis indices, the basis has
exactly five recurrence parameters regardless of its order $K$. This
distinguishes the construction from the general three term recurrences used
by many classical orthogonal polynomial families, whose coefficients usually
depend on the polynomial index.
 
\begin{proposition}[Degree growth]
\label{prop:degree}
For $a\neq0$, $\deg R_n\leq2(n-1)$ for $n\geq1$. For $a=0$,
$\deg R_n\leq n-1$. Equality holds in either case whenever no leading term
cancellation occurs.
\end{proposition}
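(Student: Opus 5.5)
We argue by strong induction on $n$, using only the recurrence \eqref{eq:recurrence} and the elementary facts $\deg(pq)=\deg p+\deg q$ and $\deg(p+q)\le\max(\deg p,\deg q)$. We adopt the convention $\deg 0=-\infty$, so that $R_0=0$ imposes no constraint. The bound to be proved is $\deg R_n\le D(n-1)$, where $D=2$ if $a\ne0$ and $D=1$ if $a=0$. In both cases $\deg\alpha\le D$ and $\deg\beta\le1\le D$.

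\textbf{Base cases.} $R_1=1$ has degree $0=D\cdot 0$. $R_2=\alpha\cdot 1+\beta\cdot 0=\alpha$ has degree $\le D$, and equal to $D$ unless $\alpha$ degenerates.

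\textbf{Inductive step.} Assume $\deg R_n\le D(n-1)$ and $\deg R_{n-1}\le D(n-2)$. Then
\[
\deg(\alpha R_n)\le D+D(n-1)=Dn,
\qquad
\deg(\beta R_{n-1})\le 1+D(n-2)\le Dn-D+1\le Dn .
\]
Hence $\deg R_{n+1}\le Dn$, which is the claimed bound at index $n+1$. This gives $2(n-1)$ for $a\ne0$ and $n-1$ for $a=0$.

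\textbf{Equality.} The key observation is that for $a\ne0$ the second term has strictly smaller degree: $1+2(n-2)=2n-3<2n$. Cancellation of the leading term of $\alpha R_n$ is therefore impossible. If $\deg R_n=2(n-1)$ with leading coefficient $a^{n-1}$, then $R_{n+1}$ has leading coefficient $a^{n}\ne0$. So for $a\ne0$ equality holds unconditionally, with leading coefficient $a^{n-1}$; this is stronger than the statement requires.

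For $a=0$, both terms $\alpha R_n$ and $\beta R_{n-1}$ can reach degree $n$ when $b,d\ne0$. Tracking the leading coefficients $\ell_n$ of $x^{n-1}$ gives the recurrence $\ell_{n+1}=b\,\ell_n+d\,\ell_{n-1}$ with $\ell_1=1$ and $\ell_2=b$. Equality $\deg R_n=n-1$ holds exactly when $\ell_n\ne0$, which is precisely the ``no leading term cancellation'' condition in the statement. So for this case we only need to identify that condition; we do not need to characterize when it holds.

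\textbf{Main obstacle.} The only delicate point is the degenerate bookkeeping at the base cases. The convention $\deg 0=-\infty$ handles $R_0$. We must also avoid claiming equality when, for example, $a=b=0$, so that $\alpha=c$ is constant and $R_2=c$ has degree $0<1$. This case falls under the cancellation/degeneracy caveat: the ``equality'' clause is phrased conditionally, and for $a=0$ we state it through $\ell_n\ne0$.
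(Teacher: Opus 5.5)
Your upper-bound induction, unified through $D\in\{1,2\}$, is the same argument as the paper's two-case induction. Your $a\neq0$ equality argument is also correct: $\deg(\beta R_{n-1})\le 2n-3<2n$, so no cancellation is possible and $\mathrm{lc}(R_n)=a^{n-1}$. This is sharper than the paper's appendix, which imposes a spurious condition $a\,\mathrm{lc}(R_n)+d\,\mathrm{lc}(R_{n-1})\neq0$ in that case.

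The $a=0$ equality paragraph, however, contains a false step. The same strict degree gap holds there: $\deg(\beta R_{n-1})\le 1+(n-2)=n-1<n$, so $\beta R_{n-1}$ never reaches degree $n$. Your chain $1+D(n-2)\le Dn-D+1\le Dn$ threw away exactly the slack that shows this, since $1+D(n-2)=Dn-2D+1\le Dn-1$. Concretely, the term $dx\cdot\ell_{n-1}x^{n-2}$ lands in $x^{n-1}$, not $x^n$. The correct recurrence is therefore $\ell_{n+1}=b\,\ell_n$, so $\ell_n=b^{n-1}$, not $\ell_{n+1}=b\ell_n+d\ell_{n-1}$.

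The Jacobsthal setting $(a,b,c,d,e)=(0,0,1,2,0)$ refutes your recurrence. It predicts $\ell_3=d=2$, hence $\deg R_3=2$, whereas $R_3=1+2x$, and in general $\deg R_n=\lfloor (n-1)/2\rfloor$ there. Your closing claim ``equality iff $\ell_n\neq0$'' survives only because it is tautological. The mechanism you describe, cancellation between $b\ell_n$ and $d\ell_{n-1}$, never occurs.

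The correct conclusion is that neither regime has any leading-term cancellation. For $a=0$ and $n\ge2$, $\deg R_n=n-1$ holds if and only if $b\neq0$. The only failure mode is the degeneration $b=0$ of $\alpha$ that you flagged in your last paragraph, and it breaks equality for every $n\ge2$, not just at $R_2$. The paper's appendix states the same incorrect condition $b\,\mathrm{lc}(R_n)+d\,\mathrm{lc}(R_{n-1})\neq0$, so do not lean on it.
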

\noindent
\emph{Sketch.} With $\delta_n=\deg R_n$, Eq.~\eqref{eq:recurrence} gives
$\delta_{n+1}\leq\max\{\deg\alpha+\delta_n,\,\deg\beta+\delta_{n-1}\}$; since
$\deg\alpha=2$ when $a\neq0$ (else $\leq1$) and $\deg\beta\leq1$ always, the
two stated bounds follow by induction on $\delta_1=0$. Practically: the
quadratic term $ax^2$ raises the maximum degree gained per recursion step
from one to two, so at a fixed order $K$ the $a\neq0$ regime can reach basis
elements of up to twice the degree available on the $a=0$ slice a capacity
statement only, since cancellation can keep the realized degree lower A complete proof is provided in Appendix~\ref{app:proof-degree}.
 
\subsection{Recovering Classical Polynomial Families}
\label{subsec:special-cases}
 
A typical three-term recurrence for a classical orthogonal-polynomial family
has the form
\begin{equation}
    p_{n+1}(x)=(A_nx+B_n)p_n(x)-C_np_{n-1}(x),
    \label{eq:classical-recurrence}
\end{equation}
where $A_n,B_n$, and $C_n$ may depend on $n$. The recurrence in
Definition~\ref{def:recurrence} instead uses coefficients that are constant
with respect to $n$, so only classical families with degree-independent
recurrence coefficients are reachable as special cases among them both
kinds of Chebyshev polynomials and several Fibonacci type polynomial
sequences~\cite{lee2012pqfibonacci,luzon2009riordan}.
 
\begin{proposition}[Chebyshev polynomials as a special case]
\label{prop:chebyshev}
Setting $(a,b,c,d,e)=(0,2,0,0,-1)$ in
Definition~\ref{def:recurrence} yields the recurrence
\begin{equation}
    R_{n+1}(x)=2xR_n(x)-R_{n-1}(x).
    \label{eq:chebyshev-recurrence}
\end{equation}
With the initial conditions $R_0=0$ and $R_1=1$, this recurrence produces
$R_n(x)=U_{n-1}(x)$ for $n\geq1$, where $U_n$ denotes the Chebyshev
polynomials of the second kind. Replacing the initial conditions by
$(R_0,R_1)=(1,x)$ produces the Chebyshev polynomials of the first kind,
$R_n(x)=T_n(x)$.
\end{proposition}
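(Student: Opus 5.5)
The plan is to substitute the parameters and then argue by strong induction on $n$, comparing $R_n$ with the Chebyshev sequences term by term. First, plugging $(a,b,c,d,e)=(0,2,0,0,-1)$ into Eq.~\eqref{eq:recurrence} gives $\alpha(x)=2x$ and $\beta(x)=-1$. That is exactly Eq.~\eqref{eq:chebyshev-recurrence}, so the first claim is immediate.

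For the second-kind statement, I would use the standard characterization of $U_n$. It satisfies $U_0=1$, $U_1=2x$, and $U_{m+1}=2xU_m-U_{m-1}$ for $m\ge1$. Extending this recurrence backwards by one step forces the convention $U_{-1}=0$, since $U_1=2xU_0-U_{-1}$ gives $U_{-1}=0$. Define $S_n:=U_{n-1}$ for $n\ge0$. Then $S_0=0=R_0$ and $S_1=U_0=1=R_1$. Also, for every $n\ge1$,
\[
S_{n+1}=U_n=2xU_{n-1}-U_{n-2}=2xS_n-S_{n-1}.
\]
Here the case $n=1$ is covered by the convention $U_{-1}=0$. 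So $S$ and $R$ satisfy the same second-order recurrence with the same two initial values. A two-term strong induction then gives $R_n=S_n=U_{n-1}$ for all $n\ge1$: the base cases are $n=0,1$, and the inductive step applies the common recurrence to two equal predecessors.

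For the first kind, I would replace the initial conditions by $(R_0,R_1)=(1,x)$. These agree with $T_0=1$ and $T_1=x$. The Chebyshev polynomials of the first kind satisfy $T_{n+1}=2xT_n-T_{n-1}$ for $n\ge1$, for example via $T_n(\cos\theta)=\cos n\theta$ and the identity $\cos((n+1)\theta)+\cos((n-1)\theta)=2\cos\theta\cos n\theta$. The same uniqueness-by-induction argument then gives $R_n=T_n$ for all $n\ge0$.

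The only delicate point is bookkeeping: the index shift $R_n=U_{n-1}$ together with the boundary convention $U_{-1}=0$, which the recurrence for $U$ must be checked to respect at $n=1$. I would verify this directly from $U_1=2x=2x\cdot1-0$. As a sanity check I would also compute $R_2=1$, $R_3=2x$ and $R_4=4x^2-1$, and confirm that these equal $U_1,U_2,U_3$ respectively. I would also note that the first-kind claim steps outside Definition~\ref{def:recurrence} only through its initial conditions; the recurrence itself is unchanged.
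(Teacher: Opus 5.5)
Your proof is correct and matches the paper's own justification. The paper gives no separate proof: it relies on the substitution $\alpha(x)=2x$, $\beta(x)=-1$ and on the remark that a second-order linear recurrence is fixed by its two initial values, which your two-step induction makes explicit. One slip is in your sanity check: in fact $R_2=2x\cdot 1-0=2x=U_1$, $R_3=4x^2-1=U_2$ and $R_4=8x^3-4x=U_3$, whereas the values $1,2x,4x^2-1$ you list are $R_1,R_2,R_3$ (that is, $U_0,U_1,U_2$), so the check as written would not match $U_1,U_2,U_3$.
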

\noindent
Because Eq.~\eqref{eq:recurrence} is linear and second order, its solutions
for fixed $(a,b,c,d,e)$ form a two-dimensional space, so the same
coefficients can generate more than one named sequence depending on the
initial condition Chebyshev $T_n$ and $U_n$ above being the standard
example. RecKAN fixes $(R_0,R_1)=(0,1)$ throughout, which selects the
Fibonacci-type solution for whichever coefficients training settles on.
 
\begin{table}[H]
\centering
\caption{Named polynomial families recovered from the recursive basis of
Definition~\ref{def:recurrence} for selected settings of $(a,b,c,d,e)$.
``Init. match'' indicates whether the standard initial condition agrees with
$(R_0,R_1)=(0,1)$.}
\label{tab:special-cases}
\begin{tabular}{@{}lccccccl@{}}
\toprule
Family & $a$ & $b$ & $c$ & $d$ & $e$ & Init. match & Reference \\
\midrule
Chebyshev, 2nd kind $U_n$ & 0 & 2 & 0 & 0 & $-1$ & \checkmark & \cite{ss2024chebykan} \\
Chebyshev, 1st kind $T_n$ & 0 & 2 & 0 & 0 & $-1$ & $R_1=x$ & \cite{ss2024chebykan} \\
Fibonacci polynomials $F_n$ & 0 & 1 & 0 & 0 & $1$ & \checkmark & \cite{lee2012pqfibonacci} \\
Pell polynomials $p_n$ & 0 & 2 & 0 & 0 & $1$ & \checkmark & \cite{luzon2009riordan} \\
Jacobsthal polynomials $J_n$ & 0 & 0 & 1 & 2 & $0$ & \checkmark & \cite{lee2012pqfibonacci} \\
Fermat polynomials $\phi_n$ & 0 & 3 & 0 & 0 & $-2$ & $R_1=3x$ & \cite{luzon2009riordan} \\
\bottomrule
\end{tabular}
\end{table}
 
\noindent
Every listed family has $a=0$, and hence belongs to the degree one per step
sub-family described by Proposition~\ref{prop:degree}. The learned recurrence
can move beyond this slice by activating the quadratic term in $\alpha(x)$,
by introducing a linear term in $\beta(x)$, or by jointly using the remaining
coefficients bases that need not coincide with any fixed classical family.
The Chebyshev setting of Proposition~\ref{prop:chebyshev} identifies the
point at which the learnable recurrence coincides with the fixed basis used
in ChebyKAN~\cite{ss2024chebykan}, giving a natural initialization from which
optimization remains free to adapt the basis to the data.
 
\subsection{Edge Functions and the RecKAN Layer}
\label{subsec:layer}
 
Given the basis $R_0,\dots,R_K$ generated by Eq.~\eqref{eq:recurrence}, the
learnable univariate edge function in Eq.~\eqref{eq:kan-layer-generic} is
parameterized as
\begin{equation}
    \phi_{i,j}(x_i)=\sum_{n=0}^{K}w_{i,j,n}R_n(x_i),
    \label{eq:edge-function}
\end{equation}
where $w_{i,j,n}\in\mathbb{R}$ are learnable combination coefficients. For
an input $\mathbf{x}\in\mathbb{R}^{d_{\text{in}}}$, define the basis tensor
\begin{equation}
    \mathbf{R}(\mathbf{x})\in\mathbb{R}^{d_{\text{in}}\times(K+1)},
    \qquad
    \mathbf{R}(\mathbf{x})_{i,n}=R_n(x_i),
    \label{eq:basis-tensor}
\end{equation}
evaluated independently for each input coordinate. With a weight tensor
$\mathbf{W}\in\mathbb{R}^{d_{\text{in}}\times d_{\text{out}}\times(K+1)}$,
the layer output is
\begin{equation}
    y_j=\sum_{i=1}^{d_{\text{in}}}\sum_{n=0}^{K}
    \mathbf{R}(\mathbf{x})_{i,n}W_{i,j,n},
    \qquad
    \mathbf{y}=\mathbf{R}(\mathbf{x})\mathbin{\!\times_{1,3}\!}\mathbf{W}.
    \label{eq:layer-output}
\end{equation}
A RecKAN layer with $d_{\text{in}}$ inputs, $d_{\text{out}}$ outputs, and
basis order $K$ therefore contains
$d_{\text{in}}d_{\text{out}}(K+1)$ combination weights.
 
\subsection{Network Architecture and Parameter Sharing}
\label{subsec:architecture}
 
A RecKAN network of depth $L$ is a composition of RecKAN layers,
\begin{equation}
F(\mathbf{x})=
\left(\Phi^{(L)}\circ\Phi^{(L-1)}\circ\cdots\circ\Phi^{(1)}\right)(\mathbf{x}),
\label{eq:network-composition}
\end{equation}
where each $\Phi^{(\ell)}$ applies
Eqs.~\eqref{eq:basis-tensor}--\eqref{eq:layer-output} with its own
combination tensor $\mathbf{W}^{(\ell)}$. A single set of recurrence
coefficients $(a,b,c,d,e)$ is instantiated once and shared by every layer, so
the learned basis contributes exactly five shared parameters to the network,
independent of depth; only the combination weights are layer-specific.
Pre-activations entering the basis are mapped to a bounded range beforehand,
elementwise, via $h=\tanh(z)$, after which $x_i$ in
Eq.~\eqref{eq:recurrence} is replaced by $h_i$.
 
\begin{figure}[H]
\noindent\hspace*{-1.25cm} % میزان جابه جایی به چپ را با تغییر عدد تنظیم کنید
\begin{tikzpicture}[
    node distance=26mm and 24mm,
    every node/.style={font=\small},
    R/.style={draw,circle,minimum size=10mm,thick,fill=blue!6},
    param/.style={draw,rectangle,rounded corners,minimum height=8mm,
                  minimum width=34mm,fill=orange!12,thick,align=center},
    arr/.style={-{Latex[length=2.2mm]},thick},
    parr/.style={-{Latex[length=2mm]},thick,orange!70!black}
]
\node[R] (R0) {$R_0=0$};
\node[R,right=of R0] (R1) {$R_1=1$};
\node[R,right=of R1] (R2) {$R_2$};
\node[R,right=of R2] (R3) {$R_3$};
\node[R,right=of R3,fill=blue!2] (Rdots) {$\cdots$};
\node[R,right=of Rdots] (RK) {$R_K$};
 
\draw[arr] (R0.south) to[out=-55,in=-125] node[midway,below]{\scriptsize $\beta(x)$} (R2.south);
\draw[arr] (R1.south) to[out=-55,in=-125] node[midway,below]{\scriptsize $\beta(x)$} (R3.south);
\draw[arr] (R2.south) to[out=-55,in=-125] node[midway,below]{\scriptsize $\beta(x)$} (Rdots.south);
\draw[arr] (R3.south) to[out=-55,in=-125] node[midway,below]{\scriptsize $\beta(x)$} (RK.south);
\draw[arr] (R1.north) to[out=55,in=125] node[midway,above]{\scriptsize $\alpha(x)$} (R2.north);
\draw[arr] (R2.north) to[out=55,in=125] node[midway,above]{\scriptsize $\alpha(x)$} (R3.north);
\draw[arr] (R3.north) to[out=55,in=125] node[midway,above]{\scriptsize $\alpha(x)$} (Rdots.north);
\draw[arr] (Rdots.north) to[out=55,in=125] node[midway,above]{\scriptsize $\alpha(x)$} (RK.north);
 
\node[param,above=20mm of R2,xshift=12mm] (params)
{$(a,b,c,d,e)$\\shared across all steps};
\draw[parr] (params.south west) to[out=-100,in=90] (R2.north);
\draw[parr] (params.south) to[out=-90,in=90] (R3.north);
\draw[parr,dashed] (params.south east) to[out=-80,in=90] (RK.north);
\end{tikzpicture}
\caption{Computational graph of the recursive polynomial basis for a scalar
input $x$. Each basis function is computed from the two preceding functions
through the shared coefficients $(a,b,c,d,e)$.}
\label{fig:recurrence-graph}
\end{figure}
\begin{figure}[H]
\centering
\begin{tikzpicture}[
    node distance=10mm and 14mm,
    every node/.style={font=\small},
    layer/.style={draw,rectangle,rounded corners,minimum height=14mm,
                  minimum width=24mm,thick,fill=blue!6,align=center},
    basisbox/.style={draw,rectangle,rounded corners,minimum height=10mm,
                     minimum width=45mm,thick,fill=orange!12,align=center},
    io/.style={font=\small},
    arr/.style={-{Latex[length=2.5mm]},thick}
]
\node[io] (x) {$\mathbf{x}\in\mathbb{R}^{d_0}$};
\node[layer,right=of x] (l1) {Layer $\Phi^{(1)}$\\\scriptsize $\mathbf{W}^{(1)}$};
\node[layer,right=of l1] (l2) {Layer $\Phi^{(2)}$\\\scriptsize $\mathbf{W}^{(2)}$};
\node[right=of l2,font=\Large] (dots) {$\cdots$};
\node[layer,right=of dots] (lL) {Layer $\Phi^{(L)}$\\\scriptsize $\mathbf{W}^{(L)}$};
\node[io,right=of lL] (y) {$F(\mathbf{x})\in\mathbb{R}^{d_L}$};
\draw[arr] (x) -- (l1);
\draw[arr] (l1) -- (l2);
\draw[arr] (l2) -- (dots);
\draw[arr] (dots) -- (lL);
\draw[arr] (lL) -- (y);
\node[basisbox,below=12mm of l2,xshift=8mm] (basis)
{Shared recursive basis module\\$(a,b,c,d,e)\to R_0,\dots,R_K$};
\draw[arr,orange!70!black] (basis.west) to[bend right=15] (l1.south);
\draw[arr,orange!70!black] (basis.north) -- (l2.south);
\draw[arr,orange!70!black] (basis.east) to[bend left=15] (lL.south);
\end{tikzpicture}
\caption{RecKAN architecture. All layers share one recursive basis generated
by $(a,b,c,d,e)$, while each layer has separate combination weights.}
\label{fig:architecture}
\end{figure}
 
\subsection{Numerical Stability and Training Scheme}
\label{subsec:stability}
 
The recurrence coefficients can in principle take arbitrary real values, and
repeated multiplication through the recurrence can increase the magnitude of
basis functions as the basis order grows. We use four complementary
controls.
\vspace{1cm} 
\paragraph{Bounded recurrence coefficients.}
Rather than optimizing $(a,b,c,d,e)$ directly, we optimize unconstrained
parameters $(\tilde a,\tilde b,\tilde c,\tilde d,\tilde e)\in\mathbb{R}^5$
and obtain the recurrence coefficients through
\begin{equation}
\begin{aligned}
a&=B\tanh(\tilde a), & b&=B\tanh(\tilde b), & c&=B\tanh(\tilde c),\\
d&=B\tanh(\tilde d), & e&=B\tanh(\tilde e),
\end{aligned}
\label{eq:param-squash}
\end{equation}
for a fixed $B>0$. This ensures $(a,b,c,d,e)\in[-B,B]^5$ while preserving
end-to-end differentiability. When $B>2$, the exact Chebyshev point from
Proposition~\ref{prop:chebyshev} lies in the interior of the feasible region.
 
\paragraph{Magnitude normalization.}
Each newly generated basis value is rescaled by one scalar computed over the
current batch before it is used in the following recurrence step:
\begin{equation}
\widehat R_{n+1}(x)=
\frac{R_{n+1}(x)}
{\max\!\left(\varepsilon,\max_x\left|R_{n+1}(x)\right|\right)}.
\label{eq:normalization}
\end{equation}
Here $\varepsilon>0$ provides numerical safety, and the denominator is
detached from the gradient computation, so normalization acts only as a
forward pass stabilizer. A single normalizing scalar is used per basis index
$n$, not per feature, so relative scale between input coordinates is
preserved.
 
\paragraph{Input range control.}
Pre-activations are passed through $\tanh(\cdot)$ before evaluating the
recurrence, keeping recursion arguments in a bounded interval and preventing
uncontrolled growth from large inputs.
 
\paragraph{Two-timescale optimization and gradient clipping.}
The recurrence parameters affect every generated basis function, whereas the
combination weights enter linearly at the layer output. We therefore allow a
separate learning rate for the recurrence parameters,
\begin{equation}
\eta_{\text{basis}}=\gamma\eta_{\text{weights}},
\qquad \gamma\in(0,1],
\label{eq:two-lr}
\end{equation}
and optionally freeze the recurrence parameters during an initial warm-up
period, in addition to global gradient-norm clipping at every optimization
step.
 \begin{algorithm}[H]
\caption{Forward pass of a RecKAN layer}
\label{alg:forward}
\begin{algorithmic}[1]
\Require input $\mathbf{x} \in \mathbb{R}^{d_{\text{in}}}$; raw recurrence
parameters $(\tilde a,\tilde b,\tilde c,\tilde d,\tilde e)$; weight tensor
$\mathbf{W} \in \mathbb{R}^{d_{\text{in}} \times d_{\text{out}} \times (K+1)}$;
bound $B$; basis order $K$
\State $\mathbf{h} \leftarrow \tanh(\mathbf{x})$ \Comment{input range control}
\State $(a,b,c,d,e) \leftarrow B\tanh(\tilde a), B\tanh(\tilde b), B\tanh(\tilde c), B\tanh(\tilde d), B\tanh(\tilde e)$
\State $R_0 \leftarrow \mathbf{0}$, \quad $R_1 \leftarrow \mathbf{1}$ \Comment{elementwise over $\mathbf{h}$}
\For{$n = 1, \dots, K-1$}
    \State $\alpha \leftarrow a\,\mathbf{h}^2 + b\,\mathbf{h} + c$; \quad
           $\beta \leftarrow d\,\mathbf{h} + e$
    \State $R_{n+1} \leftarrow \alpha \odot R_n + \beta \odot R_{n-1}$
    \State $R_{n+1} \leftarrow R_{n+1} \,/\, \max(\varepsilon,\, \max|R_{n+1}|)$ \Comment{Eq.~\eqref{eq:normalization}, detached}
\EndFor
\State $\mathbf{R} \leftarrow \operatorname{stack}(R_0, \dots, R_K)$ \Comment{shape $d_{\text{in}} \times (K+1)$}
\State $\mathbf{y} \leftarrow \mathbf{R} \times_{1,3} \mathbf{W}$ \Comment{Eq.~\eqref{eq:layer-output}}
\State \Return $\mathbf{y}$
\end{algorithmic}
\end{algorithm}
\section{Experiments and Results}
\label{sec:experiments}

\subsection{Experimental Setup}
\label{subsec:exp-setup}

We evaluate RecKAN against three fixed or alternatively parameterized KAN
baselines: ChebyKAN~\cite{ss2024chebykan}, a Jacobi-polynomial variant in the
style of fKAN~\cite{aghaei2024fkan}, and a B-spline variant in the style of
the original KAN~\cite{liu2024kan}. We also compare RecKAN against a standard
multilayer perceptron (MLP) on the synthetic function-approximation tasks.

Every comparison uses a matched parameter protocol. For a given task, all KAN
variants use the same input hidden output shape. The basis order for
polynomial methods, or the spline grid size for the spline baseline, is chosen
so that total parameter counts are closely matched across the compared KAN
models. The synthetic MLP comparisons use parameter comparable MLPs; the exact
parameter counts are reported with the corresponding results. All models use
Adam optimization and gradient clipping. Within each task, models are trained
with the same epoch budget and no baseline is assigned a reduced optimization
budget.

\FloatBarrier
\subsection{Function Approximation}
\label{subsec:exp-function-fit}

We first evaluate the learned basis on two synthetic regression targets with
local, non stationary, and multi scale structure. These tasks are designed to
make differences in the representational behavior of a fixed basis and a
learned recurrence directly visible.

\paragraph{1D piecewise oscillatory function.}
The first target is a noisy one dimensional function containing a
high frequency oscillatory region together with a smooth, slowly varying
region. RecKAN contains 149 parameters and is compared against an MLP with 385
parameters. Figure~\ref{fig:piecewise-1d} shows the fitted functions together
with the noisy training observations.

\begin{figure}[H]
\centering
\includegraphics[width=0.85\textwidth]{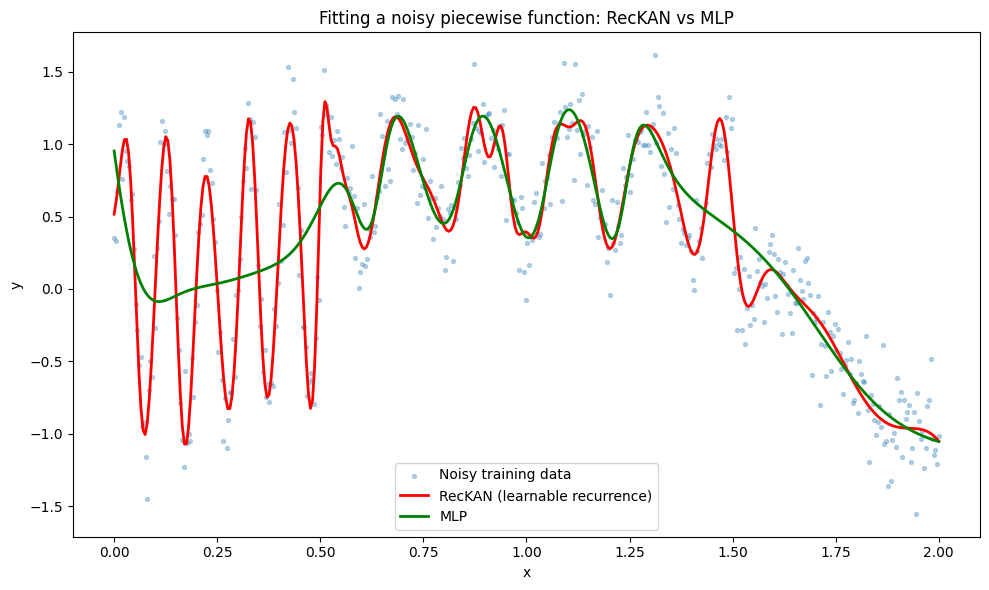}
\caption{Fitting a noisy piecewise 1D function. RecKAN (red) tracks both the
high frequency oscillatory region and the smooth region, whereas the
parameter comparable MLP (green) under fits the oscillatory portion and
approximates a local average there.}
\label{fig:piecewise-1d}
\end{figure}

\begin{table}[H]
\centering
\caption{Training loss (MSE) on the 1D piecewise function, recorded at
epoch 33{,}000.}
\label{tab:piecewise-1d}
\begin{tabular}{@{}lccc@{}}
\toprule
Model & Parameters & Epoch & MSE \\
\midrule
MLP    & 385 & 33{,}000 & 0.1583 \\
RecKAN & 149 & 33{,}000 & \textbf{0.0391} \\
\bottomrule
\end{tabular}
\end{table}

\noindent
At the recorded epoch 33{,}000 checkpoint, RecKAN attains an MSE of 0.0391,
compared with 0.1583 for the MLP, while using fewer parameters (149 versus
385). The visual fit in Figure~\ref{fig:piecewise-1d} is consistent with this
difference: RecKAN follows the high frequency part of the target as well as
the smooth portion of the domain, whereas the MLP does not recover the rapid
local oscillations.

\FloatBarrier
\paragraph{2D fractal like noisy function.}
The second target is a two dimensional multi-scale surface constructed from
sinusoids at several spatial frequencies with additive noise. RecKAN has 1,445
parameters, using three layers of shape $2\to8\to16\to1$ at basis order $8$
plus five shared recurrence parameters. It is compared against an MLP with
2,305 parameters and hidden-layer widths 64 and 32. Figure~\ref{fig:fractal-2d}
shows the predicted surfaces next to the noisy target.

\begin{figure}[H]
\centering
\includegraphics[width=0.95\textwidth]{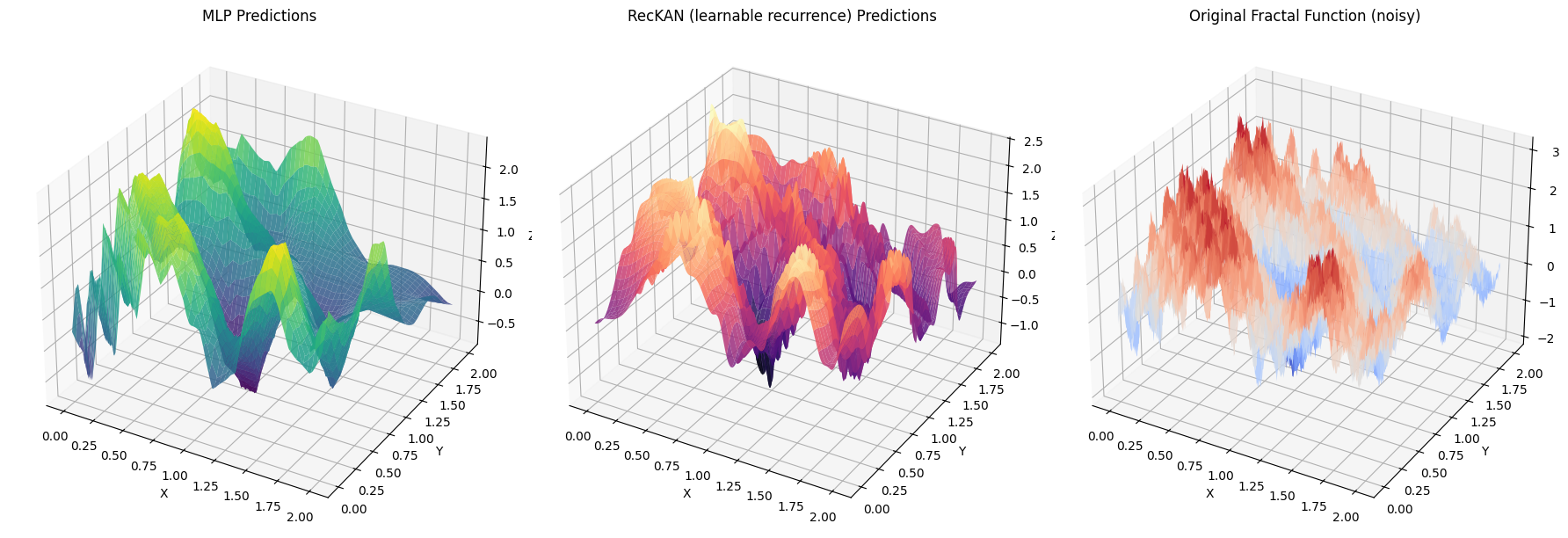}
\caption{Fitting a noisy fractal like 2D function. Left: MLP predictions.
Center: RecKAN predictions. Right: noisy target surface. Both models recover
the coarse structure; RecKAN additionally recovers more fine scale,
high frequency detail visible in the target.}
\label{fig:fractal-2d}
\end{figure}

\begin{table}[H]
\centering
\caption{Training loss (MSE) on the 2D fractal like function, recorded at
epoch 3{,}001.}
\label{tab:fractal-2d}
\begin{tabular}{@{}lccc@{}}
\toprule
Model & Parameters & Epoch & MSE \\
\midrule
MLP    & 2,305 & 3{,}001 & 0.200 \\
RecKAN & \textbf{1,445} & 3{,}001 & \textbf{0.113} \\
\bottomrule
\end{tabular}
\end{table}

\noindent
At the recorded epoch-3{,}001 checkpoint, RecKAN achieves a lower training
MSE than the MLP, 0.113 versus 0.200, while using fewer parameters. The
predicted surface in Figure~\ref{fig:fractal-2d} is consistent with the
quantitative result: both models reproduce broad variations, while RecKAN
recovers more of the target's local high frequency structure.

\FloatBarrier
\subsection{Analysis of the Learned Recurrence Parameters}
\label{subsec:exp-learned-params}

Beyond predictive performance, the five learned recurrence parameters
$(a,b,c,d,e)$ can be interpreted using the recurrence analysis in
Section~\ref{subsec:special-cases}. The two synthetic experiments were
initialized at
\begin{equation}
(a,b,c,d,e)=(0,2,0,-1,0).
\label{eq:experiment-initialization}
\end{equation}
This initialization is Chebyshev like because it has $\alpha(x)=2x$, but it
is not the exact Chebyshev setting from Proposition~\ref{prop:chebyshev}: its
far term coefficient is $\beta(x)=-x$, rather than the constant
$\beta(x)=-1$ required by the standard Chebyshev recurrence.

\begin{table}[H]
\centering
\caption{Learned recurrence parameters at the end of the synthetic runs.
Both runs use the initialization $(a,b,c,d,e)=(0,2,0,-1,0)$.}
\label{tab:learned-params}
\begin{tabular}{@{}lccccc@{}}
\toprule
Run & $a$ & $b$ & $c$ & $d$ & $e$ \\
\midrule
Initialization & 0.000 & 2.000 & 0.000 & $-1.000$ & 0.000 \\
1D piecewise fit & 0.335 & 1.789 & $-0.067$ & $-0.374$ & $-0.864$ \\
2D fractal fit & 0.049 & 2.076 & 0.001 & $-1.001$ & $-0.101$ \\
\bottomrule
\end{tabular}
\end{table}

\paragraph{2D fractal-like function.}
The 2D run remains close to its initialization. In particular,
$a=0.049$ remains near zero, while the largest change is a small constant
offset in $\beta(x)$ through $e=-0.101$. The learned basis therefore stays
close to the degree one per step regime described in
Proposition~\ref{prop:degree}. This is consistent with the visual result in
Figure~\ref{fig:fractal-2d}, where RecKAN improves reconstruction of fine
detail without requiring a large structural departure from its initial basis.

\paragraph{1D piecewise function.}
The 1D run moves more substantially in two directions. First,
$\beta(x)=dx+e$ changes from an input dependent term dominated by $d=-1$ to a
term dominated by the constant component $e=-0.864$. Second, the quadratic
coefficient grows from $a=0$ to $a=0.335$. By
Proposition~\ref{prop:degree}, this activates a recurrence whose basis
functions can accumulate up to two degrees per recursion step, rather than
one. At basis order $K=8$, the degree bound consequently expands from seven
on the $a=0$ slice to fourteen when the quadratic contribution is active.

\noindent
This provides a plausible interpretation of the fit in
Figure~\ref{fig:piecewise-1d}: the local high frequency portion of the target
may benefit from the additional polynomial capacity provided by the learned
quadratic term. The interpretation is descriptive rather than causal;
establishing causality would require controlled ablations, such as directly
comparing the learned basis spectrum with a fixed degree-8 Chebyshev basis on
the same target.

\FloatBarrier
\subsection{Benchmark Classification and Forecasting}
\label{subsec:exp-benchmarks}

We next compare RecKAN with the three KAN baselines on five real benchmark
datasets: MNIST and CIFAR-10 for image classification, AG News for text
classification, ECG5000 for biomedical time series classification, and
ETTh1 for time-series forecasting. The models are compared under the
matched-parameter protocol described in Section~\ref{subsec:exp-setup}.
For each dataset, we report the mean best test accuracy (for classification
tasks) or mean best test MSE (for the forecasting task) and standard
deviation across multiple independent runs.

\begin{table}[H]
\centering
\caption{Mean best test performance and standard deviation across multiple runs on five benchmark datasets. All models are closely parameter matched within each dataset; the largest parameter count difference is five. The best mean performance in each dataset row is shown in bold. For classification tasks, higher accuracy is better; for ETTh1 forecasting, lower MSE is better.}
\label{tab:benchmark-results}
\small
\setlength{\tabcolsep}{8pt}
\renewcommand{\arraystretch}{1.2}
\begin{tabular}{lcccccc}
\toprule
Dataset & Model & Task & $n_{\text{params}}$ & Mean Best & Std \\
\midrule
MNIST & RecKAN & classification & 96,575 & \textbf{97.393\%} & 0.1617 \\
MNIST & ChebyKAN & classification & 96,570 & 97.030\% & 0.2107 \\
MNIST & JacobiKAN & classification & 96,572 & 97.190\% & 0.1334 \\
MNIST & SplineKAN & classification & 96,570 & 96.877\% & 0.1554 \\
\midrule
CIFAR-10 & RecKAN & classification & 796,101 & \textbf{54.387\%} & 0.2285 \\
CIFAR-10 & ChebyKAN & classification & 796,096 & 53.493\% & 0.4051 \\
CIFAR-10 & JacobiKAN & classification & 796,098 & 53.203\% & 0.0900 \\
CIFAR-10 & SplineKAN & classification & 796,096 & 51.280\% & 0.0346 \\
\midrule
AG News & RecKAN & classification & 264,901 & \textbf{88.400\%} & 0.0656 \\
AG News & ChebyKAN & classification & 264,896 & 88.273\% & 0.0873 \\
AG News & JacobiKAN & classification & 264,898 & 88.230\% & 0.2778 \\
AG News & SplineKAN & classification & 264,896 & 87.723\% & 0.1250 \\
\midrule
ECG5000 & RecKAN & classification & 20,389 & \textbf{93.890\%} & 0.2575 \\
ECG5000 & ChebyKAN & classification & 20,384 & 93.757\% & 0.2254 \\
ECG5000 & JacobiKAN & classification & 20,386 & 93.807\% & 0.0848 \\
ECG5000 & SplineKAN & classification & 20,384 & 93.747\% & 0.1908 \\
\midrule
ETTh1 & RecKAN & regression & 88,229 & \textbf{0.011172} & 0.001523 \\
ETTh1 & ChebyKAN & regression & 88,224 & 0.036781 & 0.004517 \\
ETTh1 & JacobiKAN & regression & 88,226 & 0.033786 & 0.007165 \\
ETTh1 & SplineKAN & regression & 88,224 & 0.011879 & 0.000192 \\
\bottomrule
\end{tabular}
\end{table}

\noindent
RecKAN achieves the highest mean test accuracy on all four classification
benchmark datasets in Table~\ref{tab:benchmark-results}. On MNIST, RecKAN
attains 97.39\% accuracy, outperforming the strongest baseline (JacobiKAN)
by 0.20 percentage points. On CIFAR-10, RecKAN reaches 54.39\%, surpassing
the best baseline (ChebyKAN) by 0.89 percentage points. For AG News,
RecKAN achieves 88.40\%, improving upon the best baseline (ChebyKAN) by
0.13 percentage points. On ECG5000, RecKAN obtains 93.89\%, exceeding
the best baseline (JacobiKAN) by 0.08 percentage points.

\noindent
For the ETTh1 forecasting task, RecKAN achieves the lowest mean test MSE
of 0.011172, outperforming SplineKAN (0.011879), JacobiKAN (0.033786),
and ChebyKAN (0.036781). The performance gap between RecKAN and
SplineKAN is relatively small on this task, while both significantly
outperform the other two baselines. This suggests that RecKAN and
SplineKAN are particularly well suited for time series forecasting
tasks.

\begin{table}[H]
\centering
\caption{Improvement of RecKAN over baseline KAN variants across all datasets. For classification tasks, values represent the relative gain in mean best test accuracy (\%). For ETTh1, values represent the relative reduction in MSE.}
\label{tab:improvement}
\small
\begin{tabular}{@{}lcccc@{}}
\toprule
Dataset & vs ChebyKAN & vs JacobiKAN & vs SplineKAN \\
\midrule
MNIST & +0.363\% & +0.203\% & +0.516\% \\
CIFAR-10 & +0.894\% & +1.184\% & +3.107\% \\
AG News & +0.127\% & +0.170\% & +0.677\% \\
ECG5000 & +0.133\% & +0.083\% & +0.143\% \\
ETTh1 & +69.63\% & +66.93\% & +5.95\% \\
\midrule
\textbf{Average (Classification)} & \textbf{+0.379\%} & \textbf{+0.409\%} & \textbf{+1.110\%} \\
\bottomrule
\end{tabular}
\end{table}

\noindent
The results demonstrate that RecKAN consistently achieves competitive or
superior performance across diverse data modalities, including images,
text, and time series signals. The learned recursive basis proves to be
an effective alternative to fixed polynomial bases, offering improved
representational flexibility while maintaining a compact parameter
footprint. The standard deviations indicate that RecKAN exhibits stable
performance across runs, with particularly low variance on AG News
($\pm$0.07) and competitive stability on other datasets. Notably,
RecKAN shows the largest improvement on CIFAR-10 (+3.107\% over
SplineKAN and +0.894\% over ChebyKAN), a more challenging image
classification task, while maintaining consistent gains across all
benchmarks. On average, RecKAN outperforms SplineKAN by 1.11\%,
ChebyKAN by 0.38\%, and JacobiKAN by 0.41\% on the classification
benchmarks.

\FloatBarrier
\FloatBarrier
\subsection{RecKAN Classifier Heads for Convolutional Networks}
\label{subsec:exp-cnn-heads}

We further evaluate whether the learnable recursive basis can complement a
convolutional feature extractor. We compare a standard CNN with a CNN--RecKAN
variant on Fashion-MNIST, CIFAR-10, and SVHN. For each dataset, both models use
the same convolutional backbone and differ only in their classifier head. The
standard CNN uses a two layer MLP head, whereas CNN--RecKAN replaces this head
with a recursive polynomial classifier.

The models are closely parameter matched. For Fashion-MNIST, the standard CNN
contains 146{,}823 trainable parameters and CNN--RecKAN contains 146{,}863.
For CIFAR-10 and SVHN, the corresponding counts are 295{,}111 and 295{,}151.
Thus, CNN--RecKAN uses only 40 additional trainable parameters in each
comparison. All paired models use the same input preprocessing, convolutional
backbone, batch size, optimizer, learning rate schedule, weight decay, gradient
clipping, and training epoch budget.

\begin{table}[H]
\centering
\caption{Mean best test accuracy (\%) and standard deviation across multiple runs for standard CNN and CNN--RecKAN models with identical convolutional backbones and closely matched parameter counts. The best mean accuracy in each dataset row is shown in bold.}
\label{tab:cnn-reckan-results}
\small
\setlength{\tabcolsep}{8pt}
\renewcommand{\arraystretch}{1.2}
\begin{tabular}{lcccccc}
\toprule
Dataset & Model & Task & $n_{\text{params}}$ & Mean Best (\%) & Std (\%) \\
\midrule
Fashion-MNIST & CNN--RecKAN & classification & 146,863 & \textbf{93.357} & 0.1830 \\
Fashion-MNIST & Standard CNN + MLP & classification & 146,823 & 92.840 & 0.1300 \\
\midrule
CIFAR-10 & CNN--RecKAN & classification & 295,151 & \textbf{88.503} & 0.0983 \\
CIFAR-10 & Standard CNN + MLP & classification & 295,111 & 88.117 & 0.1351 \\
\midrule
SVHN & CNN--RecKAN & classification & 295,151 & \textbf{96.150} & 0.1697 \\
SVHN & Standard CNN + MLP & classification & 295,111 & {96.070} & 0.0424 \\
\bottomrule
\end{tabular}
\end{table}

\noindent
Under the matched parameter protocol, CNN--RecKAN obtains higher mean best 
test accuracy on all three datasets: 93.357\% versus 92.840\% on Fashion-MNIST, 
88.503\% versus 88.117\% on CIFAR-10, and 96.150\% versus 96.070\% 
on SVHN. The corresponding absolute improvements are 0.517, 0.386, and 0.080 
percentage points, respectively. These results demonstrate that replacing 
the MLP classifier head with a RecKAN classifier head consistently improves 
performance across different image classification tasks, achieving gains 
with only 40 additional trainable parameters. The learned recursive basis 
effectively complements the convolutional feature extractor, providing a 
lightweight yet effective enhancement for CNN-based architectures.

\begin{table}[H]
\centering
\caption{Learned recurrence coefficients at the best CNN--RecKAN checkpoint
for each image classification dataset.}
\label{tab:cnn-learned-recurrence}
\small
\begin{tabular}{@{}lrrrrr@{}}
\toprule
Dataset & $a$ & $b$ & $c$ & $d$ & $e$ \\
\midrule
Fashion-MNIST & 1.0961 & 2.2617 & 0.2105 & $-0.0990$ & $-1.4294$ \\
CIFAR-10      & 1.0026 & 2.3224 & 0.6071 & 0.6865 & $-1.7763$ \\
SVHN          & 1.4354 & 2.3903 & 0.5558 & 0.2305 & $-1.7463$ \\
\bottomrule
\end{tabular}
\end{table}

The learned recurrence parameters in Table~\ref{tab:cnn-learned-recurrence}
are reported from one representative run; they differ across datasets. In all
three CNN experiments, the quadratic coefficient $a$ becomes nonzero, placing
the learned basis outside the degree one per step sub family containing the
classical special cases in Table~\ref{tab:special-cases}. At the same time,
the learned values retain a positive coefficient near two for the linear term
$b$ and a negative constant feedback term $e$. CIFAR-10 learns the largest
input-dependent contribution in $\beta(x)=dx+e$, with $d=0.6865$, whereas
Fashion-MNIST learns a nearly constant $\beta(x)$ with $d=-0.0990$. These
task-dependent parameter settings show that the learned recurrence does not
simply retain its initial Chebyshev like configuration. They are descriptive
observations rather than causal explanations of the accuracy differences;
establishing causality would require controlled ablations.

\noindent
Across all three datasets, CNN--RecKAN consistently outperforms the standard
CNN with MLP head, with average improvement of 0.328 percentage points using
only 40 additional parameters. The most substantial gain is observed on
Fashion-MNIST (+0.517\%), while the improvement on SVHN is marginal (+0.080\%)
and within the standard deviation. The standard deviations indicate that
CNN--RecKAN exhibits slightly higher variance on Fashion-MNIST and SVHN
compared to the standard CNN, but lower variance on CIFAR-10. Overall, these
results suggest that replacing an MLP head with a RecKAN classifier head
provides a lightweight and effective enhancement for CNN-based architectures.

\FloatBarrier
\section{Conclusion}
\label{sec:conclusion}

We introduced RecKAN, a Kolmogorov--Arnold Network in which the univariate
basis is generated by a learnable second order polynomial recurrence. Rather
than selecting a spline, Chebyshev, Jacobi, wavelet, or other basis family
before optimization, RecKAN learns the coefficients of the rule that produces
the basis itself. This construction recovers several familiar
constant coefficient polynomial recurrences as special cases, including
Chebyshev, Fibonacci, Pell, and Jacobsthal polynomial sequences, while also
allowing the learned basis to move beyond these predefined families.

The degree analysis shows that activating the quadratic recurrence coefficient
changes the maximum degree growth from one to two degrees per recursion step.
This provides a direct and interpretable capacity mechanism: at a fixed basis
order, RecKAN can construct polynomial basis elements of higher degree than
the classical $a=0$ recurrence slice. The learned coefficients further make
this mechanism observable, allowing the resulting basis to be inspected rather
than treated solely as an implicit representation.

Across the five benchmark datasets (four classification and one forecasting), 
RecKAN achieves the highest reported test accuracy on all classification 
tasks and the lowest MSE on the ETTh1 forecasting benchmark among the 
compared Chebyshev, Jacobi, and spline-based KAN variants under closely 
matched parameter budgets. On the synthetic function approximation tasks, 
RecKAN obtains lower training error than the parameter comparable MLPs and 
better captures local high frequency structure. The learned recurrence 
parameters vary substantially across these tasks, indicating that the model 
adapts its basis rather than retaining its Chebyshev like initialization.

The CNN head experiments extend this observation beyond fully KAN based
architectures. With identical convolutional feature extractors and nearly
identical parameter counts, CNN--RecKAN achieves higher reported test accuracy
than matched MLP classifier heads on Fashion-MNIST, CIFAR-10, and SVHN. Taken
together, these results suggest that learning the basis generating recurrence
is a useful alternative to selecting a fixed polynomial basis in advance.
\section{Limitations and Future Work}
\label{sec:limitations}

RecKAN currently uses one set of five recurrence coefficients shared across
all edges and layers of a network. This global sharing makes the learned basis
compact and easy to interpret, but it may limit expressiveness when different
layers, channels, or regions of an input require substantially different
univariate representations. A natural extension is to study layer specific,
group specific, or edge specific recurrences while retaining constraints that
preserve numerical stability and interpretability.

The present formulation uses a second order recurrence with a quadratic
coefficient in the current term and a linear coefficient in the preceding
term. This choice provides a small, tractable parameterization and includes
several well known polynomial sequences, but it does not cover all classical
orthogonal polynomial families whose recurrence coefficients vary with the
basis index. Extending RecKAN to controlled index dependent recurrences,
higher order recurrences, rational recurrences, or mixtures of recursive and
non-polynomial basis functions could enlarge the family of adaptive bases
available to KANs.

The stabilization scheme relies on bounded recurrence coefficients, bounded
inputs, basis normalization, and separate optimization control for recurrence
parameters. These components make the current implementation practical, but
their interactions with basis order, network depth, normalization design, and
different data modalities remain open. A more detailed analysis of the
optimization geometry and numerical properties of learned recurrences may
clarify when particular recurrence structures are preferable.

Finally, the learned coefficients provide a compact description of the basis,
but their task level interpretation remains descriptive. Future work can
connect coefficient trajectories to explicit spectral properties, study the
effective polynomial degrees used by trained edge functions, and develop
regularizers that encourage selected structural properties such as sparsity,
proximity to known polynomial families, or controlled departures from a
reference recurrence. These directions may further strengthen RecKAN as both
a flexible function approximator and an interpretable basis learning
framework.

% =====================================================================
%  APPENDIX: ADDITIONAL PLOTS
% =====================================================================
\appendix
\section{Additional Visualizations of the Recursive Basis}
\label{app:extra-plots}

In this appendix, we provide comprehensive visualizations of the recursive polynomial basis functions \(R_n(x)\) and the learned recurrence coefficients across all experiments discussed in Section~\ref{subsec:exp-learned-params}. These plots supplement the main text by offering a detailed view of how the model adapts its basis to different tasks.

\begin{figure}[H]
\centering
\includegraphics[width=0.95\textwidth]{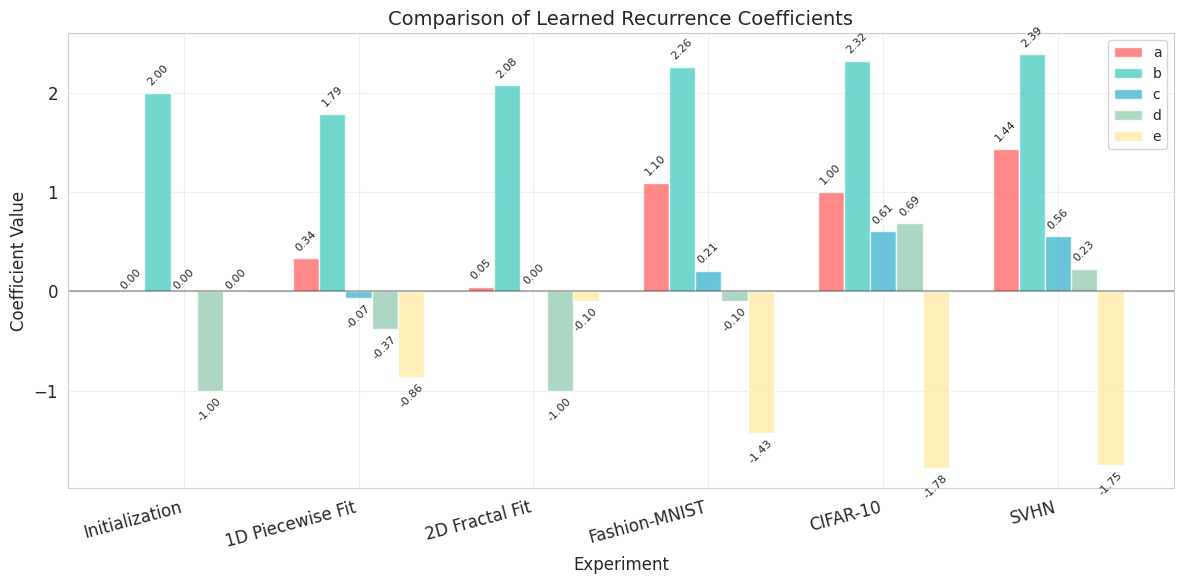}
\caption{Comparison of learned recurrence coefficients across different experiments. The bar chart illustrates the distinct values of \(a, b, c, d, e\) learned by the model, showing significant deviation from the initial Chebyshev like configuration, particularly in image-based benchmarks.}
\label{fig:app_coeff_comparison}
\end{figure}

\begin{figure}[H]
\centering
\includegraphics[width=0.95\textwidth]{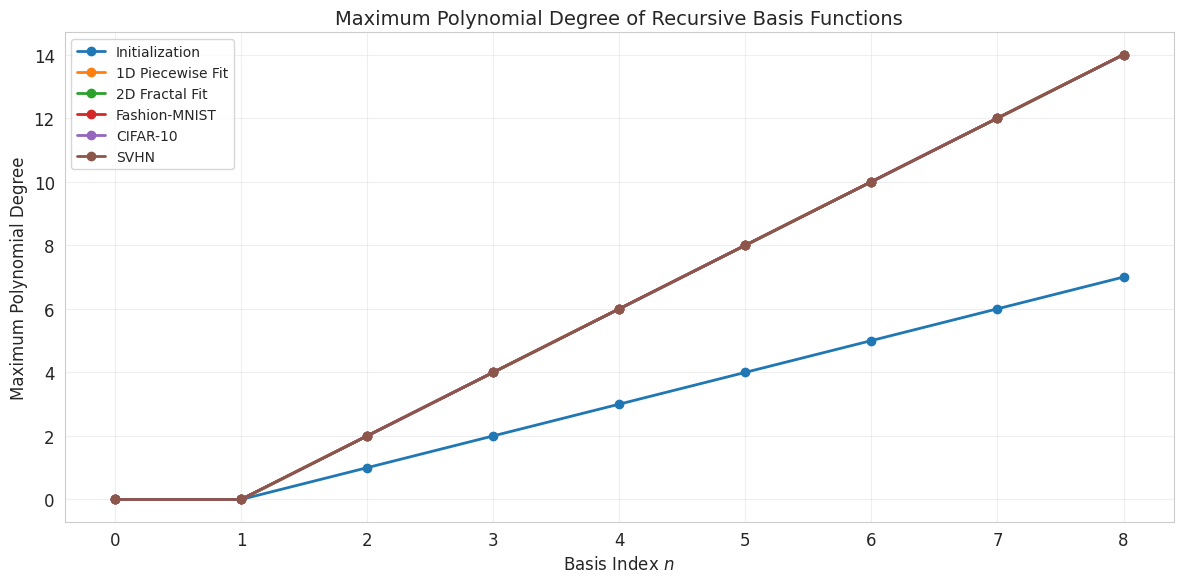}
\caption{Maximum polynomial degree of recursive basis functions as a function of the basis index \(n\). The plot confirms that the learned coefficients (e.g., in the SVHN experiment) allow the basis to reach higher polynomial degrees (up to 14) compared to the initialization (up to 7).}
\label{fig:app_degree_comparison}
\end{figure}

\begin{figure}[H]
\centering
\includegraphics[width=0.95\textwidth]{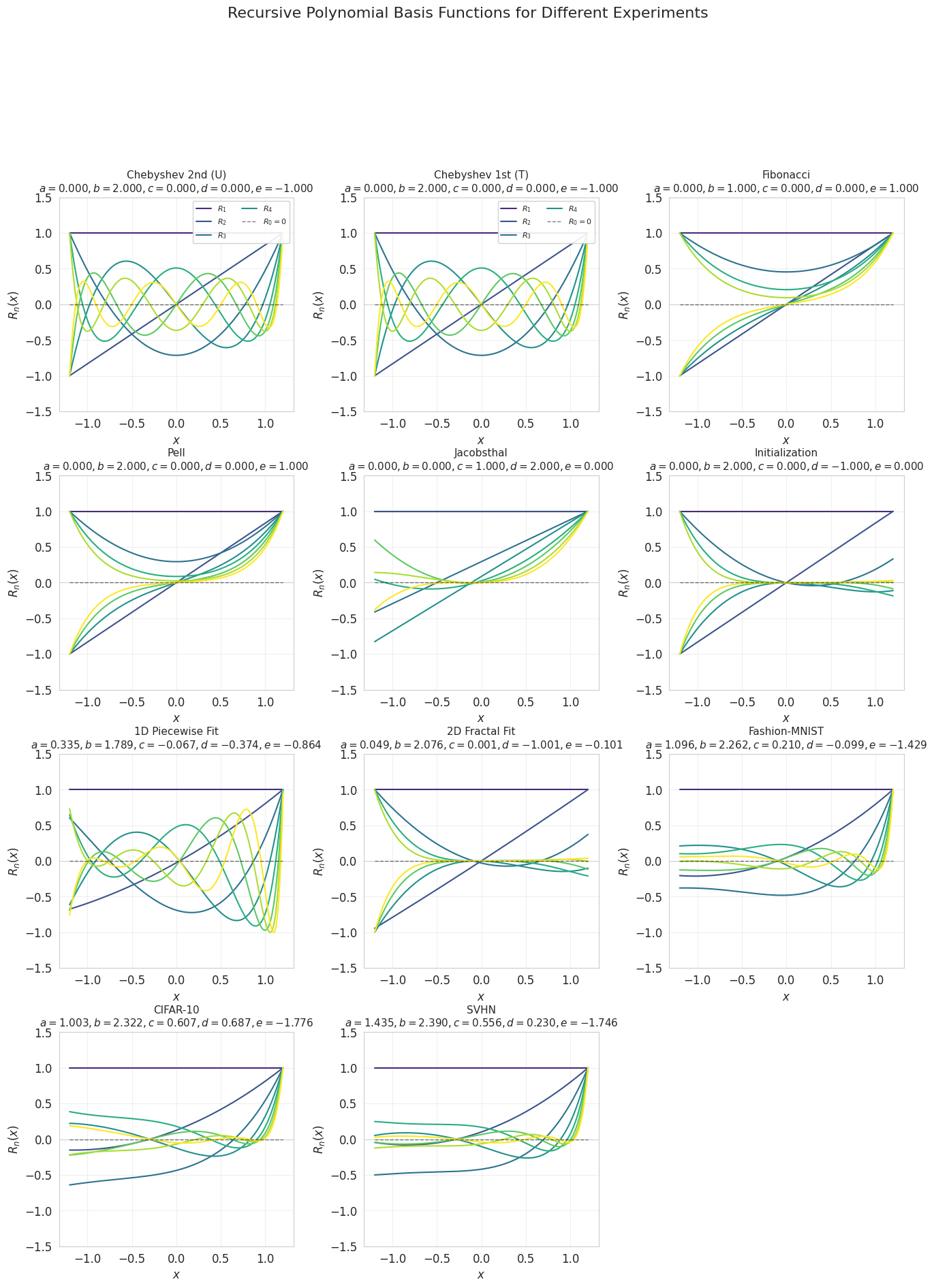}
\caption{Comparison of the recursive polynomial basis functions, including classical polynomial families (Chebyshev 1st/2nd kind, Fibonacci, Pell, Jacobsthal) and the learned bases for different experiments. This plot highlights that RecKAN can recover known families or move beyond them depending on the learned coefficients.}
\label{fig:app_basis_all}
\end{figure}

\section{Complete Proof of Proposition 1 (Degree Growth)}
\label{app:proof-degree}

Degree growth:
For \(a \neq 0\), \(\deg R_n \leq 2(n-1)\) for \(n \geq 1\). For \(a = 0\),
\(\deg R_n \leq n-1\). Equality holds in either case whenever no leading term
cancellation occurs.

\begin{proof}
Let \(\delta_n = \deg R_n\) for \(n \geq 0\), with the convention that \(\deg 0 = -\infty\).
From Definition~\ref{def:recurrence}, we have:
\begin{align}
R_0(x) &= 0 \quad \Rightarrow \quad \delta_0 = -\infty, \\
R_1(x) &= 1 \quad \Rightarrow \quad \delta_1 = 0, \\
R_{n+1}(x) &= \alpha(x)R_n(x) + \beta(x)R_{n-1}(x), \quad n \geq 1,
\end{align}
where \(\alpha(x) = ax^2 + bx + c\) and \(\beta(x) = dx + e\).

\paragraph{Case 1: \(a \neq 0\).}
In this case, \(\deg \alpha = 2\) and \(\deg \beta \leq 1\). For \(n \geq 1\), we have:
\begin{equation}
\deg(\alpha R_n) = 2 + \delta_n, \quad \deg(\beta R_{n-1}) \leq 1 + \delta_{n-1}.
\end{equation}
Therefore, by the triangle inequality for polynomial degrees:
\begin{equation}
\delta_{n+1} \leq \max\{2 + \delta_n,\; 1 + \delta_{n-1}\}.
\end{equation}

We prove by induction that \(\delta_n \leq 2(n-1)\) for all \(n \geq 1\).

\emph{Base case:} For \(n = 1\), \(\delta_1 = 0 \leq 2(1-1) = 0\), so the bound holds.

\emph{Inductive step:} Assume \(\delta_k \leq 2(k-1)\) for all \(k \leq n\). Then:
\begin{align}
\delta_{n+1} &\leq \max\{2 + \delta_n,\; 1 + \delta_{n-1}\} \\
&\leq \max\{2 + 2(n-1),\; 1 + 2(n-2)\} \\
&= \max\{2n,\; 2n-3\} \\
&= 2n = 2((n+1)-1).
\end{align}
Thus, by induction, \(\delta_n \leq 2(n-1)\) for all \(n \geq 1\).

\paragraph{Case 2: \(a = 0\).}
In this case, \(\deg \alpha \leq 1\) (since \(\alpha(x) = bx + c\)) and \(\deg \beta \leq 1\). For \(n \geq 1\), we have:
\begin{equation}
\deg(\alpha R_n) \leq 1 + \delta_n, \quad \deg(\beta R_{n-1}) \leq 1 + \delta_{n-1}.
\end{equation}
Therefore:
\begin{equation}
\delta_{n+1} \leq 1 + \max\{\delta_n,\; \delta_{n-1}\}.
\end{equation}

We prove by induction that \(\delta_n \leq n-1\) for all \(n \geq 1\).

\emph{Base case:} For \(n = 1\), \(\delta_1 = 0 \leq 1-1 = 0\), so the bound holds.

\emph{Inductive step:} Assume \(\delta_k \leq k-1\) for all \(k \leq n\). Then:
\begin{align}
\delta_{n+1} &\leq 1 + \max\{\delta_n,\; \delta_{n-1}\} \\
&\leq 1 + \max\{n-1,\; n-2\} \\
&= 1 + (n-1) = n = (n+1)-1.
\end{align}
Thus, by induction, \(\delta_n \leq n-1\) for all \(n \geq 1\).

\paragraph{Equality conditions.}
Equality holds when no cancellation occurs between the dominant terms of \(\alpha(x)R_n(x)\) and \(\beta(x)R_{n-1}(x)\). In the \(a \neq 0\) case, this requires:
\begin{equation}
a \cdot \text{lc}(R_n) + d \cdot \text{lc}(R_{n-1}) \neq 0
\end{equation}
for each \(n\), where \(\text{lc}(R_n)\) denotes the leading coefficient of \(R_n\). In the \(a = 0\) case, equality requires:
\begin{equation}
b \cdot \text{lc}(R_n) + d \cdot \text{lc}(R_{n-1}) \neq 0.
\end{equation}

\paragraph{Interpretation.}
The bound \(\delta_n \leq 2(n-1)\) for \(a \neq 0\) means that the basis functions can grow by up to two degrees per recursion step, while for \(a = 0\), they grow by at most one degree per step. This is because the quadratic term \(ax^2\) in \(\alpha(x)\) doubles the degree contribution at each step. For example, at basis order \(K = 8\):
\begin{itemize}
\item If \(a = 0\), the maximum achievable degree is \(7\).
\item If \(a \neq 0\), the maximum achievable degree is \(14\).
\end{itemize}
This demonstrates that by activating the quadratic term, we can construct basis functions with higher polynomial degrees without increasing the basis order \(K\).
\end{proof}

\end{document}